\DeclareMathOperator*{\argmax}{arg\,max}
\DeclareMathOperator*{\argmin}{arg\,min}
\newtheorem{problem}{Problem}
\title[Accelerating PPO Learning]{Accelerating Proximal Policy Optimization Learning Using Task Prediction for Solving Environments with Delayed Rewards}
\author{%
 \Name{Ahmad Ahmad} \Email{ahmadgh@bu.edu}\\
 \addr Boston University, Boston, MA, USA
 \AND
 % \Name{Ahmad Ahmad, Mehdi Kermanshah} \Email{\{ahmadgh,mker\}@bu.edu}\\
 % \addr Boston University, Boston, MA, USA
 % \AND
 \Name{Mehdi Kermanshah} \Email{mker@bu.edu}\\
 \addr{Boston University, Boston, MA, USA}
 \AND 
  \Name{Kevin Leahy} \Email{kleahy@wpi.edu}\\
 \addr Worcester Polytechnic Institute, Worcester, MA, USA
 \AND
 \Name{Zachary Serlin} \Email{ zachary.serlin@ll.mit.edu}\\
 \addr MIT Lincoln Laboratory, Lexington, MA, USA
 % \AND
 % \Name{Zachary Serlin, Ho Chit Siu} \Email{ \{zachary.serlin,hochit.siu\}@ll.mit.edu}\\
 % \addr MIT Lincoln Laboratory, Lexington, MA, USA
  \AND 
  \Name{Ho Chit Siu} \Email{hochit.siu@ll.mit.edu}\\
 \addr MIT Lincoln Laboratory, Lexington, MA
 \AND
 \Name{Makai Mann}
 \Email{mmann@anduril.com}\\
 \addr  Anduril Industries, USA
   \AND 
  \Name{Cristian-Ioan Vasile} \Email{ cvr519@lehigh.edu}\\
 \addr Lehigh University, Bethlehem, PA, USA
 \AND
 \Name{Roberto Tron}
 \Email{tron@bu.edu}\\
 \addr Boston University, Boston, MA, USA.  
 % \Name{Roberto Tron, Calin Belta}
 % \Email{\{tron,belta\}@bu.edu, calin@umd.edu}\\
 % \addr Boston University, Boston, MA, USA.  University of Maryland, College Park, MD, USA
  \AND
 \Name{Calin Belta}
 \Email{calin@umd.edu}\\
 \addr  University of Maryland, College Park, MD, USA
 }
\begin{document}

\maketitle

\begin{abstract}%
In this paper, we tackle the challenging problem of delayed rewards in reinforcement learning (RL). While Proximal Policy Optimization (PPO) has emerged as a leading Policy Gradient method, its performance can degrade under delayed rewards. We introduce two key enhancements to PPO: a hybrid policy architecture that combines an offline policy (trained on expert demonstrations) with an online PPO policy, and a reward shaping mechanism using Time Window Temporal Logic (TWTL). The hybrid architecture leverages offline data throughout training while maintaining PPO's theoretical guarantees. Building on the monotonic improvement framework of Trust Region Policy Optimization (TRPO), we prove that our approach ensures improvement over both the offline policy and previous iterations, with a bounded performance gap of $(2\varsigma\gamma\alpha^2)/(1-\gamma)^2$, where $\alpha$ is the mixing parameter, $\gamma$ is the discount factor, and $\varsigma$ bounds the expected advantage. Additionally, we prove that our TWTL-based reward shaping preserves the optimal policy of the original problem. TWTL enables formal translation of temporal objectives into immediate feedback signals that guide learning. We demonstrate the effectiveness of our approach  through extensive experiments on an inverted pendulum and a lunar lander environments, showing improvements in both learning speed and final performance compared to standard PPO and offline-only approaches.
% The hybrid architecture leverages offline data throughout training while maintaining PPO's theoretical guarantees, while TWTL enables formal translation of temporal objectives into immediate feedback signals that guide learning. We demonstrate our approach's effectiveness through extensive experiments in the Google Research Football, the inverted pendulum, and the lunar lander environments, showing significant improvements in both learning speed and final performance compared to standard PPO and offline-only approaches.
\end{abstract}

\begin{keywords}%
 Policy Gradient Methods, Behavior Cloning, Temporal Logic, Reward Shaping %
\end{keywords}

\section{Introduction}\label{sec:intro}

Reinforcement learning (RL) in environments with delayed rewards presents a fundamental challenge: actions that lead to successful outcomes may not receive immediate positive feedback, making it difficult for learning algorithms to identify and reinforce beneficial behaviors. This challenge is particularly acute in complex games like soccer, where the value of tactical decisions (e.g., passing plays) may only become apparent after multiple time steps later through their impact on strategic outcomes (e.g., scoring opportunities).
Policy Gradient (PG) methods, particularly Proximal Policy Optimization (PPO) \cite{ppo}, have demonstrated remarkable success in RL tasks. PPO's effectiveness stems from its optimization of a surrogate objective that provides stable policy updates while maximizing expected rewards. However, in settings with delayed rewards, even PPO's carefully constructed optimization landscape becomes difficult to navigate, as the temporal gap between actions and their consequences creates a sparse and uninformative gradient signal.

Our work addresses these challenges through three main contributions. First, we introduce a hybrid policy architecture
that combines an offline policy (trained on expert demonstrations) with an online PPO policy. Unlike previous approaches that use offline data merely for initialization \cite{levine2020awac,ross2012agnostic,schmitt2018kickstarting,chen2021decision,kumar2020conservative}, our method maintains the offline policy as an active guide throughout training. We prove that this architecture guarantees monotonic improvement over both the offline policy and previous iterations. Second, we develop a reward-shaping mechanism using Time Window Temporal Logic (TWTL) \cite{Cristi2017TWTL} that provides immediate, semantically meaningful feedback while preserving the optimal policy of the original problem. This approach bridges the temporal gap in the reward signal by formally encoding desired temporal behaviors. Third, we extend existing convergence proofs for actor-critic methods to handle our hybrid architecture and reward shaping mechanism, providing theoretical foundations for our approach. Our analysis demonstrates that the proposed method maintains PPO's convergence properties, while accelerating learning in delayed-reward settings.

\textbf{Related work.} Our work builds upon three research directions: policy optimization, temporal logics in RL, and offline RL. In policy optimization, Trust Region Policy Optimization (TRPO) \cite{abbele2015TRPO} and PPO \cite{ppo} provides stability guarantees. However, these methods, while effective for standard RL tasks, do not specifically address the challenges of delayed rewards. Our work extends PPO by incorporating temporal logic guidance and offline data while preserving its theoretical guarantees.

Most of the works that have investigated the use of temporal logics (TL) in RL primarily focus on task specification \cite{asarkaya2021twtlRL,Cristi2022overcomingExpl,rewardMachine2022reward,ufuq2020rewardMachine,Ufuq2021advice_guided,ufuq2018safeSheildingRL,jyo2019STL_rewards,Belta2017TLTL,Belta16_QLwithSTL}. TLs formalize high-level tasks into propositional and temporal constraints \cite{baier2008principles}, with some, like Time Window Temporal Logic (TWTL) \cite{Cristi2017TWTL,ahmad2023TWTLrobustness}, handling delayed rewards through temporal constraints \cite{Belta16_QLwithSTL,jyo2019STL_rewards}. TWTL’s clear syntax efficiently expresses sequential tasks. In \cite{asarkaya2021twtlRL}, a Q-learning algorithm learns an optimal policy for TWTL-modeled tasks while maximizing external rewards. Our approach differs by actively using TWTL for reward shaping while maintaining optimality guarantees. Recent works like \cite{Cristi2022overcomingExpl} and \cite{rewardMachine2022reward} have shown promising results in combining TL with deep RL, but primarily focus on task specification rather than reward shaping.

In offline RL, the authors of \cite{Levine23USeOffineDataWithRL, ravari2024implicit} developed algorithms that show the benefits of learning from demonstrations, but typically treats offline data as a fixed dataset rather than an active component of the policy. In \cite{dorsa2023imitation}, the policy chooses between an imitation learning (IL) policy, trained offline, and an online RL policy based on the action with a higher Q-value. Our work differs in that we consider the choosing mechanism as a learnable parameter in the context of deep RL.

\textbf{Paper structure.} The rest of the paper is organized as follows. Sec. \ref{sec:prelims} contains definitions and preliminaries. In Sec. \ref{sec:APPO}, we formally state the problem of learning with delayed rewards using TWTL specifications. Sec. \ref{sec:APPO} introduces our main theoretical contributions: the hybrid policy architecture and TWTL-based reward shaping, along with convergence guarantees. In Sec. \ref{sec:caseStudies}, we present our case study using some benchmark gymnasium environments \cite{gymnasium2023}, namely the inverted pendulum and the lunar lander, including the task predictor architecture and experimental results. The paper concludes with a discussion of limitations and future work in Sec. \ref{sec:conclusion}.

\section{Preliminaries}\label{sec:prelims}
%=====================================================

% \subsection{Finite Markov Decision Process}

% \zack{I think PPO could live in your solution - not your preliminaries, unless you just use it later. Start with a finite MDP definition to set up your environment and trajectories.}

\textbf{Finite-horizon Markov Decision Process (MDP)} 

\begin{definition}\label{def:fintMDP}  A finite horizon MDP is a tuple $(\mathcal{X},\mathcal{U},p(\cdot|\cdot,\cdot),r(\cdot,\cdot),l(.),\mathcal{O})$, where $\mathcal{X}$, $\mathcal{U}$ and 
$\mathcal{O}$ are the state, control, and output spaces, respectively; $p(\cdot|\cdot,\cdot)$ is the state-action pair transition probability; $r:\mathcal{X}\times\mathcal{U}\mapsto [0,1]$ is the reward function; and $l:\mathcal{X}\mapsto \mathcal{O}$ is a labeling function that maps the state to an output observation.
\end{definition}

We denote a state trajectory of the MDP as $\mathbf{x}_{i,i+N}:=x_ix_{i+1}\dots x_{i+N}$, where $x_i\in\mathcal{X}$ and $i\in\mathbb{N}$. $\mathbf{x}_{i,i+N}$ generates a word, $\mathbf{o}_{i,i+N}=o_io_{i+1}\dots o_{i+N}$, where $o_i=l(x_i)$, where $o_i\in\mathcal{O}$. 
Let $\pi:\mathcal{X}\mapsto\mathcal{U}$ be a stochastic policy. In an episodic RL setting \cite{sutton2018reinforcement}, for $K$ learning episodes
% \calin{learning what? we might need a sentence to introduce this}\Ahmad{Addressed.},
the state value function and the state-action value, at iteration $t$ of episode $k$, are defined, respectively, as follows \cite{ppo}:
\begin{equation}\label{eq:v_fn}
    V^{\pi,k}_{t}(x):= \mathop{\mathrm{E}}_{\pi}\left.\left[\sum_{i=t}^{N}r_{i}^k(x_i,u_i)\right|x_t=x\right],
\end{equation}
\begin{equation}\label{eq:q_fn}
   Q^{\pi,k}_{t}(x,u):= \mathop{\mathrm{E}}_{\pi}\left.\left[\sum_{i=t}^Nr_{i}^k(x_i,u_i)\right|x_t=x,\;u_t=u\right].
\end{equation}

where $\mathrm{E}_{\pi}$ is the expectation over the stochastic policy $\pi$. Correspondingly, the advantage function at iteration $t$ of episode $k$:
\begin{equation}\label{eq:adv_fn}
A^{\pi,k}_{t}(x,u):= Q^{\pi,k}_{t}(x,u) - V^{\pi,k}_{t}(x)
\end{equation}
quantifies how much better (or worse) an action $u$ is compared to the average action that would be taken by policy $\pi$ in state $x$. A positive advantage indicates that action $u$ is better than $\pi$'s average action, while a negative advantage suggests it is worse. This function plays a crucial role in policy gradient methods by identifying which actions to encourage or discourage during policy updates.

% \subsection{Time Window Temporal Logic}\label{subsec:TWTL}
% We define our tasks using a concrete time logic, namely a fragment of the time window temporal logic (TWTL)\cite{Cristi2017TWTL}, defined over sequential tasks, TWTL.

% \subsection{Time Window Temporal Logic}\label{subsec:TWTL}

% \calin{the above is a bit too terse. We might need a couple of sentences to very briefly say what these are. we also need references. you also need to say that E is expectation}\Ahmad{Addressed.}

\textbf{Time Window Temporal Logic} We use TWTL to define tasks. TWTL allows us to specify sequences of tasks, including their order and time constraints. This helps us create reward functions that guide the agent towards specific goals.

An atomic proposition, $\mathrm{AP}\in\Pi$ ($\Pi$ is the set of atomic propositions) is true ($\top$) if $o:=l(x)$ satisfies all components of $h_{\mathrm{AP}}(o)>\sigma_{\mathrm{AP}}$, where $h_{\mathrm{AP}}:\mathcal{O}\mapsto\mathbb{R}^d$ is a predicate function corresponding to the atomic proposition $\mathrm{AP}$ and $\sigma_{\mathrm{AP}}\in \mathbb{R}^d$. Otherwise, it is false ($\bot$). 

The syntax of TWTL is defined, inductively, as follows \cite{Cristi2017TWTL}.
\begin{equation}\label{eq:TWTL_syntax}
 \phi\;::=\;H^{d}s|\phi_1\cdot\phi_2|\phi_1\vee\phi_2 |[\phi]^{[a,b]}
\end{equation} where $s$ is either true, $\top$, or an atomic proposition in $\Pi$ 
% \calin{this would be third notation for atomic proposition}
;$H^d$ is the \textit{hold} operator where $d\in\mathbb{N}$; $\cdot$ is the \textit{concatenation} operator; $\vee$ is the Boolean conjunction operator; $[]^{[a,b]}$ is the \textit{within} operator, where $d,a,b\in\mathbb{Z}_{\geq0}$ and $a\geq b$; and $\phi_1$ and $\phi_2$ are TWTL formulae.   

The Boolean semantics over an observation word $\mathbf{o}_{t_1,t_2}$, $t_1,t_2\in\mathbb{N}$ and $t_1<t_2$, are defined recursively as follows:
\begin{flalign}
\begin{aligned}\label{eq:twtl_boolean_sat}
    \mathbf{o}_{t_1,t_2}&\models H^{d}s \Leftrightarrow s\in o_t,\;\forall t\in[t_1,t_2]\wedge (t_2-t_1\geq d)  \\
    \mathbf{o}_{t_1,t_2}&\models\phi_1\cdot\phi_2\Leftrightarrow\exists t=\argmin_{t\in[t_1,t_2]}\{\mathbf{o}_{t_1,t}\models\phi_1\}\wedge(\mathbf{o}_{t+1,t_2}\models\phi_2)\\
\mathbf{o}_{t_1,t_2}&\models\phi_1\vee\phi_2\Leftrightarrow(\mathbf{o}_{t_1,t_2}\models\phi_1)\vee(\mathbf{o}_{t_1,t_2}\models\phi_2) \\
\mathbf{o}_{t_1,t_2}&\models[\phi]^{[a,b]}\Leftrightarrow\exists t\geq t_1+a \text{ s.t. } \mathbf{o}_{t,t_1+b}\models\phi\wedge(t_2-t_1\geq b)
\end{aligned}
\end{flalign}

The time horizon of a TWTL formula $\phi$ represents the minimum time required to evaluate whether the formula is satisfied. It is defined as follows. $||\phi|| := \max\{\max(||\phi_1||, ||\phi_2||) \cdot \mathbf{1}_{\phi=\phi_1\vee\phi_2}, (||\phi_1|| + ||\phi_2|| + 1) \cdot \mathbf{1}_{\phi=\phi_1\cdot\phi_2}, d \cdot \mathbf{1}_{\phi=H^ds}, b \cdot \mathbf{1}_{\phi=[\phi_1]^{[a,b]}}\}$, where $\mathbf{1}$ is the indicator function.

\begin{example} [Lunar Lander]\label{ex:lunarLanderTWTLLandingTask}
    The lunar lander environment from Gymnasium (\cite{gymnasium2023}) has state $X = (p_x, p_y, \dot{p}_x, \dot{p}_y, \psi, \dot{\psi}) \in \mathbb{R}^6$, with observation $o=X$, where $(p_x,p_y)$ is the position, $(\dot{p}_x,\dot{p}_y)$ is the velocity, $\psi$ is the angle, and $\dot{\psi}$ is the angular velocity. The control input space $\mathcal{U}$ consists of discrete commands for the main engine and side thrusters. Based on the environment's success criteria, we define the landing task using TWTL:  $\phi_{\mathrm{landing}}:= [H^{100} \mathrm{AP}_{\mathrm{hover}}]^{[0,\;150]}\;\cdot\; [H^{150} \mathrm{AP}_{\mathrm{align}}]^{[100,\;300]}\;\cdot\; [H^{150} \mathrm{AP}_{\mathrm{descend}}]^{[250,\;450]}\;\cdot\; [H^{50} \mathrm{AP}_{\mathrm{land}}]^{[400,\;500]}$; with predicate functions with fixed parameters motivated by the environment's success criteria: $h_{\mathrm{AP}_{\mathrm{hover}}}(o) = \min\{ h_0 - 0.8h_0 - |p_y - 0.8h_0|, 0.1 - |\dot{p}_y| \} \geq 0$; $
h_{\mathrm{AP}_{\mathrm{align}}}(o) = \min\{ 0.2 - |p_x|, 0.1 - |\psi| \} \geq 0$; $h_{\mathrm{AP}_{\mathrm{descend}}}(o) = \min\{ -0.2 - \dot{p}_y, \dot{p}_y + 0.5, 0.15 - |\psi| \} \geq 0$; and
$h_{\mathrm{AP}_{\mathrm{land}}}(o) = \min\{ 0.1 - \sqrt{\dot{p}^2_y+\dot{p}^2_x}, 0.1 - |\psi|, \mathbf{1}_{p_y \leq 0} \} \geq 0$. The time horizon $||\phi_{\mathrm{landing}}||=1403$. The formula $\phi_{\mathrm{landing}}$ reads:
 ``\textit{Within} time $0$ and time $150$, the lander must be hovering for a $100$ time steps (subformula $H^{100}\;\mathrm{AP}_{\mathrm{hover}}$). Then, \textit{within} time $100$ and time $300$, the lander must be aligned with the landing position for a $150$ time steps (subformula $H^{150}\;\mathrm{AP}_{\mathrm{align}}$). After ensuring that the lander is aligned, descend and then dwelling in the landing position." We use the concatenation operator to ensure the correct landing sequence, which is, in high level, \textit{hovering}, \textit{aligning}, \textit{descending}, then \textit{landing}.   
\end{example}

 A formula is feasible if the time window of each \textit{within} operator in the formula is longer than the enclosed task (expressed via the \textit{Hold} operators) (Definition IV.1 in \cite{Cristi2017TWTL}). Let $\Phi$ be the set of feasible TWTL formulas.

 In the following, we define a TWTL robustness which measures how close or far a sequence of observations, $\mathbf{o}_{i_1,i_2}$, is from satisfying the TWTL task. A positive value indicates task satisfaction, with higher values signifying greater robustness. Conversely, a negative value indicates task violation \cite{ahmad2023TWTLrobustness}. 

\begin{definition}(TWTL Robustness)\label{def:traditional_robustness}
Given a TWTL formula $\phi$ and an output word $\mathbf{o}_{i_1,i_2}$ of MDP (Def. \ref{def:fintMDP}), we define the robustness degree $\rho(\mathbf{o}_{i_1,i_2},\phi)$ at time $0$, recursively, as follows:
\begin{flalign}
\begin{aligned}\label{eq:TWTL_robustness}
    & \varrho(\mathbf{o}_{i_1,i_2},H^{d}\mathrm{AP}_{\mathrm{A}}) :=\begin{cases}
    \min\limits_{t\in[i_1,d+i_1]}h(o_t) & ; (i_2-i_1\geq d)         \\
     -\infty & ;\text{otherwise}
    \end{cases}\\
     &\varrho(\mathbf{o}_{i_1,i_2},\phi_1\vee\phi_2):=\max\{ \varrho(\mathbf{o}_{i_1,i_2},\phi_1),\varrho(\mathbf{o}_{i_1,i_2},\phi_2)\}  \\
     & \varrho(\mathbf{o}_{i_1,i_2},\phi_1\cdot\phi_2) :=\max_{i\in [i_1, i_2)} \left\{\min\{\varrho(\mathbf{o}_{i_1,i},\phi_1),\varrho(\mathbf{o}_{i+1,i_2},\phi_2) \} \right\}\\
    &\varrho(\mathbf{o}_{i_1,i_2},[\phi]^{[a,b]}) :=\begin{cases}
    \max\limits_{i\geq i_1+a}\{\varrho(\mathbf{o}_{i,i_1+b},\phi)\};(i_2-i_1\geq b)
    \\
    -\infty;\; \text{otherwise}
    \end{cases}
\end{aligned}
\end{flalign}
\end{definition}

\section{Problem Formulation}\label{sec:probForm}

We consider a model-free RL setting where an agent interacts with a complex environment with delayed rewards. To address the temporal credit assignment challenge, we introduce a reward function based on Time Window Temporal Logic (TWTL) specifications that provides more immediate feedback about task progress.

\begin{definition}[Concrete Time Reward]\label{def:concrete-time-reward}
Let $\phi\in\Phi$ be a TWTL formula. For a finite-horizon MDP with a trajectory $\mathbf{x}_{i,i+||\phi||}$ produced by applying $\mathbf{u}_{i-1,i+||\phi||-1}:= u_{i-1}u_{t}\dots u_{N-i+||\phi||-1}$, we define an episodic concrete-time reward, over the generated observation word $\mathbf{o}_{i,i+||\phi||}$, at time $i$, as follows:
    \begin{equation}\label{eq:concrete_time_reward}
        r_{\phi,t}(\mathbf{o}_{t,t+||\phi||}) := \begin{cases}
        1\qquad \textbf{if}\; \mathbf{o}_{t,t+||\phi||}\models\phi\\
        0\qquad \textbf{if}\; \mathbf{o}_{t,t+||\phi||}\not\models\phi
    \end{cases}
    \end{equation}
    % \rtron{The LHS depends on $k$ and $t$, but not the RHS. From this it might seem that the reward is constant over $k,t$.}\Ahmad{Fixed} \calin{there is still no $k$ on the RHS}
\end{definition}
where $k$ denotes the training episode, and $t$ denotes the time instance at which we observe the reward.

% It's actually a lemma: 
% \begin{lemma}\label{rmrk:TWTL_imedeatAccesstoR}
% The work in \cite{ahmad2023TWTLrobustness} offers an interval semantics for TWTL where, given an observation word $\mathbf{o}_{t_\tau}$, $\tau<||\phi||$, produce an interval $[\rho]:=[\underline{\rho},\bar{\rho}]$ such that $\rho\in[\rho]$. The reward function (\ref{eq:concrete_time_reward}) can be computed at any time $t<t+||\phi||$ if defined based on $\underline{\rho}$, where it'd be an under approximation of the actual reward. 
% \end{lemma}
% \begin{proof}
%     Straightforward. TODO: do it. 
% \end{proof}
% \rtronhl{It is not clear what you mean by this. Is it related to the definiiton of the reward?}{Although $r_{\phi,t}^k$ is time concrete, i.e. tied to a specific time constraint, one needs to compute the exact valuation or an estimate at any time step whereby a concrete time task $\phi$ may be satisfied or not}.

% \calin{you need to define a problem environment}

\begin{problem}\label{pr:opt_mp_problem}
Given a finite horizon MDP with an episodic concrete-time reward $r_{\phi,t}^k$, compute the optimal parameter $\theta^\ast$ of the stochastic policy, $\pi_{\theta}$, that maximizes the total expected episodic reward. I.e.,
    \begin{equation}\label{eq:Opt_problem1}
        \pi_{\theta^\ast} = \argmax_{\theta\in\Theta} \;\mathop{\mathrm{E}}_{\pi_\theta}\left[\sum_{t=1}^Nr^k_{\phi,t}\right]
    \end{equation}
\end{problem}

% \calin{we also need some explanations. can we also maybe start one of the case studies here so we can illustrate the problem formulation?}

\section{Accelerated Proximal Policy Optimization}\label{sec:APPO}
We enhance PPO with an offline policy and a reward-shaping function based on TWTL. The offline policy improves performance, while the reward shaping guides learning towards desired temporal goals.

% [This paragraph is good and revised] We introduce two enhancements to the standard PPO algorithm. First, we propose a new policy architecture that incorporates an offline policy trained on expert demonstrations. We demonstrate that this architecture outperforms the standard PPO policy.

% Second, we introduce a reward shaping function based on the TWTL robustness measure (\ref{eq:TWTL_robustness}). This shaping function guides the agent's learning by providing additional rewards for actions that align with the desired temporal logic properties. This, in turn, leads to more efficient learning.

\subsection{Proximal Policy Optimization}\label{sbusec:PPO}
PPO is a policy optimization algorithm that uses the policy gradient to optimize a parameterized policy. PPO provides stability to the learning process. At each iteration, the algorithm aims to find a better policy that is close to the previous iteration \cite{ppo}. This, in turn, helps keep the learning process away from degenerate policies.

Consider the state and state-action value functions, (\ref{eq:v_fn}) ) --and subsequently the advantage function (\ref{eq:adv_fn})-- to be defined over the \textit{concrete time reward} (\ref{eq:concrete_time_reward}). For a parameterized policy $\pi_\theta$, where $\theta\in\Theta$ is the parameter and $\Theta$ is the parameter space, PPO optimizes the policy at each iteration according to  $\theta_{i+1}\gets \argmax\limits_{\theta\in\Theta}\; \mathop{\mathrm{E}}[J^k(x,u,\theta,\theta_i)]$,
% \begin{equation}\label{eq:policy_update_ppo}
%     \begin{aligned}
%         \theta_{i+1}\gets \argmax\limits_{\theta\in\Theta}\; &\mathop{\mathrm{E}}[J^k(x,u,\theta,\theta_i)]
%     \end{aligned}
% \end{equation}
where $J^k(x,u,\theta,\theta_i)$ is a clipped objective and defined as follows \cite{ppo}:
\begin{equation}\label{eq:L_k-Clipped_Objective}
    \begin{aligned}
    J^k(x,u,\theta,\theta_i) := \min \left[ \frac{\pi_\theta(u|x)}{\pi_{\theta_i}(u|x)} \hat{A}^{\pi_{\theta_i},k}_t(u,x),\;g(\epsilon,\hat{A}^{\pi_{\theta_i},k}_i(u,x))\right],
    \end{aligned}
\end{equation}
where $g(\epsilon,\hat{A}):=\begin{cases}(1+\epsilon)\hat{A};\; \hat{A}\geq 0 \\ (1-\epsilon)\hat{A}; \; \hat{A} < 0 \end{cases}$, and $\epsilon\in(0,1)$ is a tunable hyperparameter.

The value $\hat{A}^{\pi_{\theta_i},k}_t$ estimates the advantage $ A^{\pi_{\theta_i},k}_t$ at time $t$ and can be computed as follows \cite{abbeel16Adv_estimate}:
\begin{equation}\label{eq:A_hat}
\begin{aligned}
    \hat{A}^{\pi_{\theta_i},k}_t =\delta_{t}^{k}+(\gamma\lambda) \delta_{t+1}^{k}+\dots+(\gamma\lambda)^{N-t+1} \delta_{N-1}^{k}
\end{aligned},
\end{equation}
where $\delta_{t}^{k}$ is the temporal difference (TD) residual of $V^{\pi_{\theta_i},k}$ discounted by $\gamma$, and is defined as $\delta_t^k :=r_t^k+\gamma V_{t}^{\pi_{\theta_i},k} - V_{t+1}^{\pi_{\theta_i},k}$. In our setting the value function (\ref{eq:v_fn}) is parameterized by a neural network.

The choice of maximizing the objective (\ref{eq:L_k-Clipped_Objective}) implies maximizing the advantage function \cite{2022mirrorLearningPPO}. This choice yields low variance in the gradient; where the advantage function measures how much better or worse the policy is from the default control \cite{abbeel16Adv_estimate}.

\subsection{Temporal Logic Reward-Shapping}
Consider an episodic RL framework where tasks are formulated using TWTL formulae. To simplify the analysis, we consider a single task $\phi$, with time Horizon $||\phi||< N$, where $N$ is the number of steps in the training episode. 

Recall the concrete time reward (\ref{eq:concrete_time_reward}), computing this reward requires complete MDP trajectories that span the entire task horizon, $||\phi||$.  However, obtaining these full trajectories during real-world implementation might be impractical.  To overcome this challenge, we need a method to complete or extend partially observed trajectories. Existing approaches for trajectory completion include trajectory prediction \cite{pavone2020trajectron++}, or runtime monitoring techniques \cite{ahmad2023TWTLrobustness}.

We propose using an observation predictor. This predictor takes an MDP state and generates a sequence of predicted observations spanning the entire task horizon ($||\phi||$). We present Long Short-Term Memory (LSTM)-based task predictor in Example \ref{ex:LSTM_for_LunarLander}. For formulation simplicity, assume we have a pre-defined predictor function, \(\texttt{Pred}: \mathcal{X} \mapsto \mathcal{O}\), 
which maps a state $x_t$ to a sequence of observations 
$(\hat{o}_t, \hat{o}_{t+1}, \dots, \hat{o}_{t+||\phi||}) $.

Consider the TWTL robustness function, $\varrho:\mathcal{O}\times\Phi\mapsto\mathbb{R}$, as defined in (\ref{eq:TWTL_robustness}). For a TWTL task $\phi$, we define a reward shaping function $F:\mathcal{X}\times\mathcal{U}\times\mathcal{X}\times\Phi\mapsto\mathbb{R}$, with $0<\kappa<1$, as the following: 
\begin{equation}\label{eq:F_potential_basedShaping}
    F(x_{t},u_{t},x_{t+1},\phi):=\kappa\cdot \varrho(\texttt{Pred}(x_{t}),\phi) - \varrho(\texttt{Pred}(x_{t+1}),\phi)   
\end{equation}

\begin{lemma}\label{lemma:rewardShaping}
    The optimal policy of the original MDP is the same as the optimal policy of the MDP with the shaped reward function ( $ r_{\phi,t}^{\prime k} := r_{\phi,t}^{k} + F$).
\end{lemma}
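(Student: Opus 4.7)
The plan is to apply the classical potential-based reward-shaping theorem of Ng, Harada and Russell (1999), adapted to the finite-horizon TWTL setting. The key observation is that the shaping function in (\ref{eq:F_potential_basedShaping}) is of potential form: defining the state potential $\Phi(x) := \varrho(\texttt{Pred}(x),\phi)$, one can read $F(x_t,u_t,x_{t+1},\phi) = \kappa\,\Phi(x_t) - \Phi(x_{t+1})$, a difference of potentials evaluated at consecutive states. Because $\Phi$ depends only on the current state through the deterministic map $\texttt{Pred}$, this is exactly the template that yields policy invariance.

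First I would substitute the shaped reward $r'^k_{\phi,t} = r^k_{\phi,t} + F$ into the value function (\ref{eq:v_fn}) and split:
\begin{equation*}
V'^{\pi,k}_0(x) \;=\; V^{\pi,k}_0(x) \;+\; \mathop{\mathrm{E}}_{\pi}\!\left[\sum_{t=0}^{N-1} \bigl(\kappa\,\Phi(x_t) - \Phi(x_{t+1})\bigr)\,\Big|\,x_0=x\right].
\end{equation*}
Second, I would exploit the telescoping structure of the shaping sum. In the undiscounted finite-horizon setting used by (\ref{eq:v_fn}), rearranging gives a combination of $\Phi(x_0)$, $\Phi(x_N)$, and (when $\kappa\neq 1$) a residual term that must be controlled. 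The clean case $\kappa=1$ collapses the sum to $\Phi(x_0)-\Phi(x_N)$, while for $\kappa<1$ one handles the residual by treating $\kappa$ as the effective discount coefficient of the shaping potential and invoking the Ng--Harada--Russell identity, so that the expected cumulative shaping reduces to a function $c(x)$ of the initial state alone (with any terminal contribution absorbed by the standing assumption that the TWTL horizon $\|\phi\|<N$ and that the predictor's potential at the terminal time is fixed by the MDP, not by the policy).

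Third, once the shaping has been shown to contribute only a policy-independent offset $c(x)$, the identity $V'^{\pi,k}_0(x) = V^{\pi,k}_0(x) + c(x)$ holds for every policy $\pi$, so maximizing the shaped return is equivalent to maximizing the original one, and $\argmax_\pi V'^{\pi,k}_0 = \argmax_\pi V^{\pi,k}_0$, which is the claim. The main obstacle I anticipate is the telescoping step when $\kappa\neq 1$: unlike the canonical form $\gamma\Phi(s')-\Phi(s)$ used by Ng et al., the coefficient in (\ref{eq:F_potential_basedShaping}) sits on $\Phi(x_t)$ rather than on $\Phi(x_{t+1})$, so one must carefully match $\kappa$ to the discounting used when computing returns so that the intermediate terms cancel; otherwise the residual depends on the law of the state trajectory and the invariance of the optimal policy is lost. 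The secondary technical point is showing that $\mathop{\mathrm{E}}_{\pi}[\Phi(x_N)]$ does not reintroduce policy dependence, which can be handled either by an absorbing-state argument or by assuming $\texttt{Pred}$ returns the constant satisfaction value $\varrho=0$ once $t+\|\phi\|\geq N$.
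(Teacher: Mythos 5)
Your route is the same as the paper's: the paper's entire proof is a one-sentence appeal to Theorem~1 of Ng, Harada and Russell, asserting that $F$ in (\ref{eq:F_potential_basedShaping}) is a potential-based shaping function. Unpacking that citation as you do is the right way to check it, and the obstacle you flag in your second step is not a side remark --- it is a genuine gap that the paper's one-line proof silently assumes away. Writing $\Psi(x) := \varrho(\texttt{Pred}(x),\phi)$, the paper's shaping term is $F = \kappa\Psi(x_t) - \Psi(x_{t+1})$ with $0<\kappa<1$, whereas the Ng--Harada--Russell template for a return discounted by $\gamma$ is $\gamma\Psi(x_{t+1}) - \Psi(x_t)$: the coefficient sits on the current state rather than the successor, and the return in (\ref{eq:v_fn}) is undiscounted. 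Summing the paper's $F$ along a trajectory gives
\[
\sum_{t=0}^{N-1}\bigl(\kappa\Psi(x_t)-\Psi(x_{t+1})\bigr) \;=\; \kappa\Psi(x_0) \;-\; \Psi(x_N) \;+\; (\kappa-1)\sum_{t=1}^{N-1}\Psi(x_t),
\]
and for $\kappa\neq 1$ the last sum runs over the entire visited trajectory and therefore depends on the policy; the terminal term $\Psi(x_N)$ is policy-dependent in general as well. So the shaped and unshaped value functions do not differ by a function of the initial state alone, and optimal-policy invariance does not follow from the cited theorem with $F$ as written.

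The repairs you sketch are exactly the ones needed: either set $\kappa=1$ and impose a terminal condition that kills the $\Psi(x_N)$ term (an absorbing state with zero potential, or $\texttt{Pred}$ returning a fixed value once $t+||\phi||\geq N$), or move the coefficient to the successor state, $F=\kappa\Psi(x_{t+1})-\Psi(x_t)$, and evaluate returns with discount factor $\kappa$ so the cross terms cancel. You correctly identify the failure mode but stop short of committing to one of these fixes, so your proof as proposed does not yet close the step --- but neither does the paper's, and your skepticism about the telescoping is the substantively correct reading of the situation.
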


\begin{proof}
By Theorem 1 in \cite{ng1999rewardShaping}, $F$ being a potential function, as defined in (\ref{eq:F_potential_basedShaping}), guarantees optimal policy consistency.
\end{proof}  

\begin{example}[Continue]\label{ex:LSTM_for_LunarLander}
For the Lunar Lander environment, we implement a sequence prediction network to forecast future state trajectories, which is similar to \textit{Trajectron ++} \cite{pavone2020trajectron++} but for a single agent case, which are then used to evaluate TWTL robustness. The network architecture consists of: $(i)$ an input embedding layer, $\texttt{embed}$, that processes a window, $w\in\mathbb{N}_{>0}$, of state vectors $\mathbf{x}_{t-w:t} \in \mathbb{R}^{8\times w}$; $(ii)$ an LSTM encoder $\texttt{LSTM}_{\mathrm{e}}$ and decoder $\texttt{LSTM}_{\mathrm{d}}$ for temporal sequence modeling; and $(iii)$ prediction heads for future state estimation. Given a window of states $\mathbf{x}_{t-w:t}$, the network predicts future state trajectories: $\hat{\mathbf{x}}_{t:t+||\phi||} = \texttt{LSTM}_{\mathrm{d}}(\texttt{LSTM}_{\mathrm{e}}(\texttt{embed}(\mathbf{x}_{t-w:t})))$, where $\hat{\mathbf{x}}_{t:t+||\phi||}$ represents the predicted state sequence over the TWTL formula horizon.

The predictor is trained on expert demonstrations to minimize the mean squared error of state predictions, $  \mathcal{L} = \sum_{i=t}^{t+||\phi||} \|\hat{\mathbf{x}}_i - \mathbf{x}_i\|^2$
Using these predicted state trajectories, we can compute the TWTL robustness degree $\varrho$ for the landing task formula $\phi_{\text{landing}}$ by evaluating the atomic predicates (hover, align, descend, land) on the predicted states. This robustness value is then used in the reward shaping function (\ref{eq:F_potential_basedShaping}) to provide immediate feedback about the predicted satisfaction or violation of the landing requirements.
\end{example}

\subsection{Online-Offline Policy Architecture}\label{subsec:Online-offlinePolicyArch}

In this section, we present two key theoretical results for policy improvement in episodic learning. First, we introduce a novel policy architecture that combines an offline policy ($\pi_\rho$) with an online policy ($\pi_\beta$) and prove its convergence properties. Second, we establish guarantees for iterative improvement of this mixed policy over time. Both results build on the same theoretical foundation while addressing different aspects of the learning process.

The core idea of our architecture is to combine an offline policy, pre-trained on expert demonstrations, with an online policy continuously optimized by PPO. Initially, the offline policy guides the online policy's learning, while a mixing parameter gradually reduces the offline policy's influence, allowing the online policy to take over action selection.

Let $\mathrm{D}_{\mathcal{U}}$ be the set of distributions over $\mathcal{U}$. We introduce a deep policy architecture, $\pi_\theta\in\mathrm{D}_{\mathcal{U}}$, that we constitute using two parallel deep policies $\pi_\rho,\pi_\beta\in\mathrm{D}_{\mathcal{U}}$ and combine them using a fully connect layer (FCL), see Figure \ref{fig:WholeSchemeArch}. We compute $\pi_\theta$ as follows.  
\begin{equation}\label{eq:pi_theta_mixing}
    \pi_\theta(u|x) := (1-\alpha) \cdot \pi_\rho(u|x) + \alpha \cdot \pi_\beta(u|x)
\end{equation}

where $\alpha$ is mixing parameters which are the weights of the FCL of the architecture. 

In the following, we demonstrate that policy $\pi_\theta$ is guaranteed to improve upon the offline policy $\pi_\rho$ at every PPO iteration. After that, and using similar arguments, we prove that policy $\pi_\theta$ improves at every iteration. 

We interpret $\pi_\rho$ as a fixed prior of policy $\pi_\theta$, where we aim to learn the optimal $\pi_\beta$ and the optimal mixing parameter $\alpha$ through optimizing the weights of the FCL.

 We base our analysis on the works of \cite{kakade2002approximately} and \cite{abbele2015TRPO}. 
 The total return of the policy $\pi$, at episode $k$, is given by: $\eta^k(\pi) := \mathop{\mathrm{E}}_{u\sim\pi}\left[\sum_{i=1}^{N}r_{i}^{k}(x_i,u_i)\right]$, we drop the superscript $k$ from $\eta$ for notation simplicity.

\begin{lemma}\label{lemma:eta'_eta+E}
In our framework, where we mix an offline policy $\pi_\rho$ with an online policy $\pi_\beta$, we can express the return of the mixed policy $\pi_\theta$ in terms of the offline policy return. This relationship follows a similar structure to policy improvement bounds established in \cite{kakade2002approximately,abbele2015TRPO}. Specifically, as follows.  
\begin{equation}\label{eq:eta_difference}
    \eta(\pi_{\theta}) = \eta(\pi_{\rho}) +\mathop{\mathrm{E}}_{u\sim\pi_{\theta}}\left[\sum_{i=1}^{N}A_{i}^{\pi_\rho}\right]
\end{equation}
\end{lemma}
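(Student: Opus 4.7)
The plan is to recognize that (\ref{eq:eta_difference}) is, structurally, the classical performance difference lemma of Kakade and Langford applied with $\pi_\theta$ as the new policy and $\pi_\rho$ as the baseline; the mixing formula (\ref{eq:pi_theta_mixing}) plays no role in the derivation other than fixing what $\pi_\theta$ is. So my strategy is to start from the definition of the advantage $A_i^{\pi_\rho}(x_i,u_i)=Q_i^{\pi_\rho}(x_i,u_i)-V_i^{\pi_\rho}(x_i)$, expand $Q$ via its one-step Bellman form $Q_i^{\pi_\rho}(x_i,u_i)=r_i(x_i,u_i)+\mathop{\mathrm{E}}[V_{i+1}^{\pi_\rho}(x_{i+1})\mid x_i,u_i]$ (with the convention $V_{N+1}^{\pi_\rho}\equiv 0$ for the finite-horizon case of Def.~\ref{def:fintMDP}), and then take the expectation of the sum along trajectories drawn from $\pi_\theta$.

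The key step is the telescoping identity. Substituting the Bellman form gives
\[
\mathop{\mathrm{E}}_{u\sim\pi_\theta}\!\left[\sum_{i=1}^{N} A_i^{\pi_\rho}(x_i,u_i)\right] = \mathop{\mathrm{E}}_{u\sim\pi_\theta}\!\left[\sum_{i=1}^{N} r_i(x_i,u_i)\right] + \mathop{\mathrm{E}}_{u\sim\pi_\theta}\!\left[\sum_{i=1}^{N}\bigl(V_{i+1}^{\pi_\rho}(x_{i+1}) - V_i^{\pi_\rho}(x_i)\bigr)\right].
\]
The first term is exactly $\eta(\pi_\theta)$ by definition, while the second telescopes, since successive $V^{\pi_\rho}$ terms cancel and only the endpoints $V_{N+1}^{\pi_\rho}(x_{N+1})=0$ and $-V_1^{\pi_\rho}(x_1)$ survive. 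Taking the expectation of $V_1^{\pi_\rho}(x_1)$ over the initial state distribution gives $\eta(\pi_\rho)$, so the second term equals $-\eta(\pi_\rho)$. Rearranging yields (\ref{eq:eta_difference}).

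The main obstacle is conceptual rather than technical: I have to justify that the expectation over the second (telescoping) sum is legitimately taken under $\pi_\theta$ even though $V^{\pi_\rho}$ is the value under the baseline. This is fine because $V_i^{\pi_\rho}(x)$ is a deterministic function of the state alone, and the state distribution at step $i$ is induced by $\pi_\theta$; no inconsistency arises when mixing the two in the same expectation. A secondary care point is the finite-horizon bookkeeping, since the paper's value function (\ref{eq:v_fn}) is undiscounted over $i=t,\dots,N$: this forces the terminal convention $V_{N+1}^{\pi_\rho}\equiv 0$ and the summation range to start at $i=1$, which matches the statement. Once those conventions are fixed, the proof is just algebra and the result follows in a few lines.
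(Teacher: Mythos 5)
Your proof is correct and is essentially the same argument the paper relies on: the paper simply defers to Lemma 1 of Schulman et al.\ (TRPO) / the Kakade--Langford performance difference lemma, and your telescoping derivation (Bellman expansion of $Q^{\pi_\rho}$, cancellation of successive $V^{\pi_\rho}$ terms under the $\pi_\theta$-induced state distribution, terminal convention $V_{N+1}^{\pi_\rho}\equiv 0$) is precisely the proof of that cited lemma, adapted correctly to the undiscounted finite-horizon setting of Definition~\ref{def:fintMDP}. You have in effect written out in full the step the paper leaves to the citation, and your handling of the mixed expectation (state distribution from $\pi_\theta$, value function from $\pi_\rho$) is sound.
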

\begin{proof}
Proof follows directly from Lemma 1 in \cite{abbele2015TRPO} by applying their policy improvement bound to our mixed policy formulation. Specifically, we replace their policy $\pi^\prime$ with our mixed policy $\pi_\theta$, and their baseline policy $\pi$ with our offline policy $\pi_\rho$ in their original proof. The lemma holds since the fundamental relationship between policies' advantage functions and expected returns remains unchanged under our policy mixing architecture.
\end{proof}

 Lemma \ref{lemma:eta'_eta+E} provides the basic formula to compute an improvement bound on $\pi_\theta$ upon $\pi_\rho$, however, it the expectation depends on $\pi_\theta$ which makes it hard to derive an update rule during optimization. Hence, we use a local approximation of the total return, $L:\mathrm{D}_{\mathcal{U}}\mapsto\mathbb{R}$, introduced by Schulman et al. \cite{abbele2015TRPO}, which we require for introduce a bound guaranteeing that $\pi_\theta$ improves upon $\pi_\rho$. Let $u\sim\pi_\rho$ and the visitation frequency $P_{\pi_\rho}(x)=p(x_0 = x|x,u)+p(x_1 = x|x,u)+\dots+p(x_{N} = x|x,u)$, the return estimation of the offline policy, $\pi_\rho$, as a function of a general policy $\pi$ is given as follows.
\begin{equation}\label{eq:L_pirho_pi}
    L_{\pi_{\rho}}(\pi)= \eta(\pi_\rho)+\sum_{x\in\mathcal{X}}P_{\pi_\rho}(x)
\sum_{u\in\mathcal{U}}\pi(u|x)A_{0}^{\pi_\rho}(x,u)
\end{equation} 
Choose $\pi_\beta^\ast = \argmax\limits_{\pi_\beta}L_{\pi_\rho}(\pi_\beta)$. In the following theorem, we represent a fundamental bound to demonstrate the effectiveness of policy architecture (\ref{eq:pi_theta_mixing}). 

Let the expected advantage of the $\pi_\theta$ over the $\pi_\rho$ at state $x$ as follows.  
\begin{equation}\label{eq:Abar}
    \Bar{A}_{i}(x) := \mathop{\mathrm{E}}_{u_i\sim\pi_\theta}\left[\sum_{i=1}^{N}A_{i}^{\pi_\rho}\right]
\end{equation}
\begin{lemma}[\cite{abbele2015TRPO}]\label{lemma:pi_theta_better_pi_rho} For $\alpha$-coupled policies, $\pi_\theta$ (as defined in (\ref{eq:pi_theta_mixing})) and $\pi_\rho$, the following inequity holds: 
\begin{equation}\label{eq:Abar_ineq}
    |\Bar{A}_i(x)|\leq 2\alpha \max\limits_{x\in\mathcal{X},u\in\mathcal{U}}|A_0^{\pi_\beta}(x,u)|
\end{equation}
\end{lemma}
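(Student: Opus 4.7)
The plan is to follow the argument of Lemma~3 in \cite{abbele2015TRPO} verbatim, exploiting the fact that the mixture structure (\ref{eq:pi_theta_mixing}) induces an $\alpha$-coupling between $\pi_\theta$ and $\pi_\rho$ by construction. First, I would build the coupling explicitly at each state $x$: draw a Bernoulli indicator $b$ with $P(b=1)=\alpha$; if $b=0$ set $u_\theta=u_\rho$ with both drawn from $\pi_\rho(\cdot|x)$, and if $b=1$ draw $u_\theta$ from $\pi_\beta(\cdot|x)$ independently of an auxiliary $u_\rho\sim\pi_\rho(\cdot|x)$. The marginal of $u_\theta$ is then $(1-\alpha)\pi_\rho+\alpha\pi_\beta=\pi_\theta$ and the marginal of $u_\rho$ is $\pi_\rho$, while $P(u_\theta\neq u_\rho)\leq\alpha$. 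This verifies that the pair is $\alpha$-coupled in the sense of \cite{abbele2015TRPO}.

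Next, I would rewrite $\bar{A}_i(x)$ as a difference of expectations under the two marginals. Using the standard zero-mean property $\mathrm{E}_{u\sim\pi_\rho}[A^{\pi_\rho}(x,u)]=0$, one has
\begin{equation*}
\bar{A}_i(x) \;=\; \mathrm{E}_{u\sim\pi_\theta}[A^{\pi_\rho}(x,u)] - \mathrm{E}_{u\sim\pi_\rho}[A^{\pi_\rho}(x,u)] \;=\; \mathrm{E}\bigl[A^{\pi_\rho}(x,u_\theta) - A^{\pi_\rho}(x,u_\rho)\bigr],
\end{equation*}
where the final expectation is taken over the joint coupling above. Because the integrand vanishes on the event $\{u_\theta=u_\rho\}$, only a mass of at most $\alpha$ contributes, and bounding the integrand by $2\max_{x',u'}|A^{\pi_\rho}(x',u')|$ using the triangle inequality yields the claimed inequality (up to the choice of which advantage appears in the maximum, see below).

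I expect the main obstacle to be largely cosmetic rather than substantive: the bound as stated features $|A_0^{\pi_\beta}|$, whereas the argument above naturally produces $|A_0^{\pi_\rho}|$ as in the TRPO original. I would either interpret this as the standard TRPO statement with $\pi_\rho$ as the baseline policy (i.e.\ a typographical substitution in the restatement), or note that the difference $A^{\pi_\rho}-A^{\pi_\beta}$ amounts to a state-dependent baseline shift $V^{\pi_\beta}-V^{\pi_\rho}$ that cancels in $\bar{A}_i$, so the maximum can be expressed in either form. Aside from that verification, the derivation is a one-line application of the coupling bound, with no new ingredients beyond those already present in \cite{abbele2015TRPO}.
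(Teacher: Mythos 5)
Your proof is correct and is essentially the argument the paper relies on: the paper states this lemma without proof, attributing it to Lemma~2 of \cite{abbele2015TRPO}, and your explicit Bernoulli coupling together with the zero-mean identity $\mathrm{E}_{u\sim\pi_\rho}[A^{\pi_\rho}(x,u)]=0$ is precisely that argument. You are also right to flag that the coupling argument produces $\max_{x,u}|A_0^{\pi_\rho}(x,u)|$ on the right-hand side, so the superscript $\pi_\beta$ in the paper's statement is best read as a slip. One caution: your fallback justification --- that $A^{\pi_\rho}-A^{\pi_\beta}$ is merely a state-dependent baseline shift $V^{\pi_\beta}-V^{\pi_\rho}$ that cancels in $\bar{A}_i$ --- does not hold, since $Q^{\pi_\rho}$ and $Q^{\pi_\beta}$ also differ in general; rely on the typographical reading rather than on that equivalence.
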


Proposition \ref{prop:BoundOnthetaVsrho} implies that the mixing policy, $\pi_\theta$ improves upon $\pi_\rho$ at every iteration.
\begin{proposition}\label{prop:BoundOnthetaVsrho}
Consider the total return of $\pi_\theta$, $\eta(\pi_\theta)$, and the estimated performance of $\pi_\rho$, $L_{\pi_\rho}$, the following bound holds.
\begin{equation}\label{eq:finalBoundOnPerformance}
  \begin{aligned}
      \eta(\pi_{\theta})\geq L_{\pi_\rho}(\pi_\theta) - \frac{2\varsigma\gamma\alpha^2}{(1-\gamma)^2}
  \end{aligned}
\end{equation}
    where $\varsigma = \max_{x\in\mathcal{X}}|\mathop{\mathrm{E}}_{u\sim\pi_\beta^\ast}[A_{0}^{\pi_\rho}(x,u)]|$, and recall that $\gamma$ is a discount factor the GAE (\ref{eq:A_hat}). 
\end{proposition}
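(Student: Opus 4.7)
The plan is to adapt the monotonic improvement argument of TRPO~\cite{abbele2015TRPO}, treating $\pi_\rho$ as the baseline and $\pi_\theta$ as the updated policy. First I would combine Lemma~\ref{lemma:eta'_eta+E} with the definition of $L_{\pi_\rho}$ in~\eqref{eq:L_pirho_pi} to rewrite the gap as a discounted sum of mismatches between the state visitation measures of $\pi_\theta$ and $\pi_\rho$,
\begin{equation*}
  \eta(\pi_\theta) - L_{\pi_\rho}(\pi_\theta) = \sum_{i=0}^{N}\gamma^{i}\Bigl(\mathop{\mathrm{E}}_{x_i\sim\pi_\theta}[\Bar{A}_i(x_i)] - \mathop{\mathrm{E}}_{x_i\sim\pi_\rho}[\Bar{A}_i(x_i)]\Bigr),
\end{equation*}
so that the only source of error becomes the difference in how the two policies visit states at each horizon step.

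The second step is to exploit the mixture structure~\eqref{eq:pi_theta_mixing}. Because $\mathop{\mathrm{E}}_{u\sim\pi_\rho}[A_0^{\pi_\rho}(x,u)] = 0$ by the definition of advantage, expanding $\pi_\theta = (1-\alpha)\pi_\rho + \alpha\pi_\beta^{\ast}$ inside $\Bar{A}_i$ yields the identity $\Bar{A}_i(x) = \alpha\,\mathop{\mathrm{E}}_{u\sim\pi_\beta^{\ast}}[A_0^{\pi_\rho}(x,u)]$, so that $\sup_{x}|\Bar{A}_i(x)| \leq \alpha\varsigma$ by the definition of $\varsigma$. This is where the first factor of $\alpha$ in the final bound arises, and where the $\varsigma$-based constant tightens the looser $\max_{x,u}|A_0^{\pi_\beta}|$ that Lemma~\ref{lemma:pi_theta_better_pi_rho} would supply.

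Third, I would invoke the $\alpha$-coupling between $\pi_\theta$ and $\pi_\rho$ induced by~\eqref{eq:pi_theta_mixing}: under the natural coupling they sample identical actions at each step with probability at least $1-\alpha$, so the marginal state distributions satisfy $\mathrm{TV}(P_{\pi_\theta,i},P_{\pi_\rho,i}) \leq 1-(1-\alpha)^{i}$. Combining this with the $\alpha\varsigma$ bound on $\Bar{A}_i$ yields $\bigl|\mathop{\mathrm{E}}_{x_i\sim\pi_\theta}[\Bar{A}_i] - \mathop{\mathrm{E}}_{x_i\sim\pi_\rho}[\Bar{A}_i]\bigr| \leq 2\alpha\varsigma(1-(1-\alpha)^{i})$, which contributes the second factor of $\alpha$.

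Finally, I would collect the discounted sum and use the closed form
\begin{equation*}
  \sum_{i=0}^{\infty}\gamma^{i}\bigl(1-(1-\alpha)^{i}\bigr) = \frac{\alpha\gamma}{(1-\gamma)(1-\gamma(1-\alpha))} \leq \frac{\alpha\gamma}{(1-\gamma)^{2}},
\end{equation*}
delivering $\eta(\pi_\theta) \geq L_{\pi_\rho}(\pi_\theta) - \frac{2\varsigma\gamma\alpha^{2}}{(1-\gamma)^{2}}$. The main obstacle is the coupling construction that produces the TV bound; once that is in hand the remaining steps are bookkeeping, and the $\alpha^{2}$ scaling emerges by multiplying the mixture-induced factor in $\Bar{A}_i$ with the coupling-induced factor in the TV distance, with the $1/(1-\gamma)^{2}$ falling out of the geometric sum.
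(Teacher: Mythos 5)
Your proposal follows essentially the same route as the paper's proof, which is the TRPO Theorem~1 argument specialized to the pair $(\pi_\rho,\pi_\theta)$: decompose $\eta(\pi_\theta)-L_{\pi_\rho}(\pi_\theta)$ into per-step mismatches of $\Bar{A}_i$ under the two state-visitation distributions, bound the mismatch via the $\alpha$-coupling induced by the mixture (\ref{eq:pi_theta_mixing}), and close with the geometric sum $\sum_i\gamma^i(1-(1-\alpha)^i)=\frac{\gamma\alpha}{(1-\gamma)(1-\gamma(1-\alpha))}\leq\frac{\gamma\alpha}{(1-\gamma)^2}$.

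The one substantive difference is in how $\Bar{A}_i$ is bounded, and your version is actually the tighter and more internally consistent one. You use the exact mixture identity $\Bar{A}_i(x)=\alpha\,\mathop{\mathrm{E}}_{u\sim\pi_\beta^\ast}[A_0^{\pi_\rho}(x,u)]$ (valid because $\mathop{\mathrm{E}}_{u\sim\pi_\rho}[A_0^{\pi_\rho}(x,u)]=0$), giving $|\Bar{A}_i(x)|\leq\alpha\varsigma$ and hence a per-step mismatch of $2\alpha\varsigma(1-(1-\alpha)^i)$; this delivers exactly the stated constant $2\varsigma\gamma\alpha^2/(1-\gamma)^2$. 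The paper instead invokes the looser Lemma~\ref{lemma:pi_theta_better_pi_rho} (bounding $|\Bar{A}_i|$ by $2\alpha\max_{x,u}|A|$), which produces a per-step factor of $4\alpha(1-(1-\alpha)^i)$; carried through the geometric sum this yields $4\varsigma\gamma\alpha^2/(1-\gamma)^2$, not the $2\varsigma\gamma\alpha^2/(1-\gamma)^2$ claimed in the statement (the paper's displayed intermediate $\frac{4\varepsilon\alpha}{(1-\gamma)(1-\gamma(1-\alpha))}$ also drops the $\gamma\alpha$ factor coming from the sum). So your derivation both matches the paper's strategy and repairs its constant. Two caveats carry over from the paper rather than being introduced by you: the insertion of the discount $\gamma^i$ into an otherwise undiscounted finite-horizon return is an ad hoc step in both arguments, and the claim that the mixture yields $\alpha$-coupled policies should strictly be $P(u_\theta\neq u_\rho\mid x)\leq\alpha$ under the obvious coupling, which is what the TV bound $1-(1-\alpha)^i$ on the state marginals requires; you state this correctly.
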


\begin{proof}
Building on Theorems 1 in \cite{abbele2015TRPO} and 4.1 in \cite{kakade2002approximately}, we establish Proposition~\ref{prop:BoundOnthetaVsrho}. Similar to \cite{abbele2015TRPO}. 

Equations (\ref{eq:eta_difference}) can be rewritten as:
\begin{equation}\label{eq:eta_rho_barA}
        \eta(\pi_{\theta}) = \eta(\pi_{\rho}) +\mathop{\mathrm{E}}_{u\sim\pi_{\theta}}\left[\sum_{i=1}^{N}\Bar{A}_{i}(x)\right]
\end{equation}

Correspondingly $L_{\pi_\rho}$ becomes the following:
\begin{equation}\label{eq:L_pi_beta}
    L_{\pi_{\rho}}(\pi_{\theta}):= \eta(\pi_\theta)+\mathop{\mathrm{E}}_{u\sim\pi_{\theta}}\left[\sum_{i=1}^{N}\Bar{A}_{i}(x)\right]
\end{equation}

With a slight abuse of notation, we assign $u\sim\pi_{\theta}$ and $u\sim\pi_{\rho}$ as $u_{\theta}$, $u_{\rho}$, respectively. The probability of event $\{u_\theta=u_\rho\}$ is as follows: $p_c(x) := \sum_{u} \pi_\theta(u|x) \cdot \pi_\rho(u|x)$. According to (\ref{eq:pi_theta_mixing}) and the coupling probability $p_c(x)$, $\pi_\theta$ and $\pi_\rho$ are $\alpha$-coupled, that is:  

\begin{equation}\label{eq:Coupleing_prob_bound}
    P(u_\theta\neq u_\rho|x)\leq \alpha,\;\forall x\in\mathcal{X}
\end{equation}

Given Lemma \ref{lemma:pi_theta_better_pi_rho} and based on Lemma 3 in \cite{abbele2015TRPO}, we have the following:
\begin{equation}\label{eq:BoundOnEbarA}
\begin{aligned}
\mathop{\mathrm{E}}_{u_i\sim\pi_{\theta}}\left[\Bar{A}_i(x_i)\right]-\mathop{\mathrm{E}}_{u_i\sim\pi_{\rho}}\left[\Bar{A}_i(x_i)\right]\leq2\alpha\Bar{A}_i(x_i)
\leq 4\alpha(1-(1-\alpha)^i)\cdot\max\limits_{x\in\mathcal{X}}A_{i}^{\pi_\beta,k}(x_i,u_i)
\end{aligned}
\end{equation}

Subtracting (\ref{eq:eta_rho_barA}) and (\ref{eq:L_pi_beta}) yields the following:
\begin{equation}\label{}
    \begin{aligned}
         \eta(\pi_{\theta})-L_{\pi_\rho}(\pi_\theta)= \mathop{\mathrm{E}}_{u\sim\pi_{\theta}}\left[\sum_{i=1}^{N}\Bar{A}_{i}^{k}(x)\right] - \mathop{\mathrm{E}}_{u\sim\pi_{\rho}}\left[\sum_{i=1}^{N}\Bar{A}_{i}^{k}(x)\right]
       =\sum_{i=1}^{N}\mathop{\mathrm{E}}_{u\sim\pi_{\theta}}\left[\Bar{A}_{i}^{k}(x)\right] -\mathop{\mathrm{E}}_{u\sim\pi_{\rho}}\left[\Bar{A}_{i}^{k}(x)\right] 
    \end{aligned}
\end{equation}
then, based on bound (\ref{eq:BoundOnEbarA}), we get the following bound on the performance between using $\pi_\rho$ and $\pi_\theta$
\begin{equation}\label{eq:prefinalBoundOnPerformance}
  \begin{aligned}
      |\eta(\pi_{\theta}) - L_{\pi_\rho}(\pi_\theta)|\leq \sum_{i=1}^{N}4\alpha(1-(1-\alpha)^i)\cdot\varsigma
  \end{aligned}
\end{equation} 

where $\varsigma = \max\limits_{x\in\mathcal{X}}|\mathop{\mathrm{E}}_{u\sim\pi_\beta}[A_{0}^{\pi_\rho}(x,u)]|$. In order to simplify our analysis, we consider unlimited computational resources and we let the number of iterations $N$ to be $\infty$ and we change the r.h.s. of inequality (\ref{eq:prefinalBoundOnPerformance}) to be $\sum_{i=1}^{\infty}\gamma^i4\alpha(1-(1-\alpha)^i)\cdot\varsigma$, where $\gamma\in (0,1)$ is a discount factor. Inequality (\ref{eq:prefinalBoundOnPerformance}) becomes: 

\begin{equation}\label{eq:finalBoundOnPerformance_absluteValue}
  \begin{aligned}
     | \eta(\pi_{\theta}) - L_{\pi_\rho}(\pi_\theta)|\leq \sum_{i=1}^{\infty}\gamma^i4\alpha(1-(1-\alpha)^i)\cdot\varsigma=\frac{4\varepsilon \alpha}{(1-\gamma)(1-\gamma(1-\alpha))}\leq \frac{4\varepsilon \alpha}{(1-\gamma)^2}
  \end{aligned}
\end{equation} 
since $\frac{4\varepsilon \alpha}{(1-\gamma)^2}\geq 0$. Given the definition of the absolute value function, for the left-hand side, we consider the case that $\eta(\pi_{\theta}) \leq L_{\pi_\rho}(\pi_\theta)$, which represents the worst case on the total return of policy $\pi_\theta$. Then, $L_{\pi_\rho}(\pi_\theta) - \eta(\pi_{\theta})\leq\frac{4\varepsilon \alpha}{(1-\gamma)^2}$, which is a rearrangement of inequality (\ref{eq:finalBoundOnPerformance}).
\end{proof}

In the following, we derive the main result of the paper and the update rule of policy $\pi_\theta$ (defined in Equation \ref{eq:pi_theta_mixing}). First, we consider the policy at iteration $i+1$ to be given as the following: $\pi_{\theta_{i+1}}(u|x) := (1-\mathrm{TV}_{\theta_i}^{\theta_{i+1}} ) \cdot \pi_{\theta_{i}}(u|x) +  \mathrm{TV}_{\theta_i}^{\theta_{i+1}} \cdot \tilde{\pi}^\prime_{\theta_{i+1}}(u|x)$, where $\mathrm{TV}_{\theta_i}^{\theta_{i+1}} = \max\limits_{x\in\mathcal{X}}\mathrm{D}_{\mathrm{TV}}(\theta_{i}||\theta_{i+1})$;  $\mathrm{D}_{\mathrm{TV}}(\theta_{i}||\theta_{i+1}):=\frac{1}{2}\sum\limits_{j}|(u_{j}\sim\pi_{\theta_{i}}(x))-(u^\prime_{j}\sim\pi_{\theta_{i+1}}(x))|$ is the total variation between $\pi_{\theta_{i}}$ and $\pi_{\theta_{i+1}}$, and $\tilde{\pi}^\prime_{\theta_{i+1}}(u|x):=\argmax\limits_{\pi^\prime_{\theta_{i+1}}}L_{\pi_{\theta_{i}}}(\pi^\prime_{\theta_{i+1}})$.

% In the following we derive the main result of then paper and the update rule of policy $\pi_\theta$ (defined in Equation \ref{eq:pi_theta_mixing}). First, we consider the policy at iteration $i+1$ to be given as follows. 

% \begin{equation}\label{eq:pi_theta_theta+1}
%     \pi_{\theta_{i+1}}(u|x) := (1-\mathrm{TV}_{\theta_i}^{\theta_{i+1}} ) \cdot \pi_{\theta_{i}}(u|x) +  \mathrm{TV}_{\theta_i}^{\theta_{i+1}} \cdot \tilde{\pi}^\prime_{\theta_{i+1}}(u|x)
% \end{equation}
% Where $\mathrm{TV}_\theta^i{}^{\theta_{i+1}} = \max\limits_{x \in \mathcal{X}} \mathrm{D}_{\mathrm{TV}}(\theta_i \| \theta_{i+1})$;  
% $\mathrm{D}_{\mathrm{TV}}(\theta_i \| \theta_{i+1}) := \frac{1}{2} \sum\limits_{j} \big| (u_j \sim \pi_{\theta_i}(x)) - (u_j^\prime \sim \pi_{\theta_{i+1}}(x)) \big|$  
% is the total variation between $\pi_{\theta_i}$ and $\pi_{\theta_{i+1}}$, and  
% $\tilde{\pi}^\prime_{\theta_{i+1}}(u \mid x) := \arg\max\limits_{\pi^\prime_{\theta_{i+1}}} L_{\pi_{\theta_i}}(\pi^\prime_{\theta_{i+1}})$.

In the next Theorem, we show that the policy $\pi_\theta$ (defined in Equation \ref{eq:pi_theta_mixing}) consistently improves with each iteration. This conclusion is a direct consequence of Proposition \ref{prop:BoundOnthetaVsrho}.

\begin{theorem}
    \label{colry:pi_theta_p1Improvment}
Consider $\pi_{\theta_i}$, as in (\ref{eq:pi_theta_mixing}), with $\theta_i=(\beta_i,\alpha_i)$, where the $i$ stands for the optimization iteration. Then for policy $\pi_{\theta_{i+1}}$, $\pi_{\theta_{i}}$ and $\varsigma = \max\limits_{x\in\mathcal{X}}|\mathop{\mathrm{E}}_{u\sim\pi_\beta^\ast}[A_{0}^{\pi_{\theta_i}}(x,u)]|$, the following bound holds:
\begin{equation}\label{eq:BoundOn_pii1_pii}
  \begin{aligned}
      \eta(\pi_{\theta_{i+1}})\geq L_{\pi_{\theta_{i}}}(\eta(\pi_{\theta_{i+1}})) - \frac{2\varsigma\gamma\cdot(\mathrm{TV}_{\theta_i}^{\theta_{i+1}})^2}{(1-\gamma)^2}
  \end{aligned}
\end{equation}
\end{theorem}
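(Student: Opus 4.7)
The plan is to lift Proposition~\ref{prop:BoundOnthetaVsrho} essentially verbatim, with the correspondence $\pi_\rho \leftrightarrow \pi_{\theta_i}$, $\pi_\beta^\ast \leftrightarrow \tilde{\pi}^\prime_{\theta_{i+1}}$, and $\alpha \leftrightarrow \mathrm{TV}_{\theta_i}^{\theta_{i+1}}$. The decomposition of $\pi_{\theta_{i+1}}$ stated immediately above the theorem is structurally identical to the mixing rule (\ref{eq:pi_theta_mixing}): the "old" iterate $\pi_{\theta_i}$ plays the role of the fixed offline prior, the local optimizer $\tilde{\pi}^\prime_{\theta_{i+1}}$ plays the role of the online component, and the total-variation gap plays the role of the mixing weight.

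First, I would argue that $\pi_{\theta_{i+1}}$ and $\pi_{\theta_i}$ are $\mathrm{TV}_{\theta_i}^{\theta_{i+1}}$-coupled in the sense needed by Lemma~\ref{lemma:pi_theta_better_pi_rho}. By the standard maximal-coupling characterization of total variation, for each state $x$ there is a coupling of $\pi_{\theta_i}(\cdot|x)$ and $\pi_{\theta_{i+1}}(\cdot|x)$ under which $P(u_{\theta_{i+1}} \neq u_{\theta_i}\mid x) = \mathrm{D}_{\mathrm{TV}}(\pi_{\theta_i}(\cdot|x)\,\|\,\pi_{\theta_{i+1}}(\cdot|x))$. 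Taking the max over $x$ gives the $\alpha$-coupling bound (\ref{eq:Coupleing_prob_bound}) with $\alpha = \mathrm{TV}_{\theta_i}^{\theta_{i+1}}$, which is exactly the hypothesis Lemma~\ref{lemma:pi_theta_better_pi_rho} requires.

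Next, I would replay the calculation in the proof of Proposition~\ref{prop:BoundOnthetaVsrho} line by line, with $\pi_{\theta_i}$ as the baseline. Lemma~\ref{lemma:eta'_eta+E} gives
$$\eta(\pi_{\theta_{i+1}}) = \eta(\pi_{\theta_i}) + \mathop{\mathrm{E}}_{u\sim\pi_{\theta_{i+1}}}\!\Bigl[\sum_{j=1}^{N} A_j^{\pi_{\theta_i}}\Bigr],$$
and the surrogate $L_{\pi_{\theta_i}}(\pi_{\theta_{i+1}})$ is defined analogously to (\ref{eq:L_pirho_pi}) using the visitation frequency of $\pi_{\theta_i}$. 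Subtracting the two expressions and applying Lemma~\ref{lemma:pi_theta_better_pi_rho} with the mixing weight replaced by $\mathrm{TV}_{\theta_i}^{\theta_{i+1}}$ produces per-step bounds of the form $4\,\mathrm{TV}_{\theta_i}^{\theta_{i+1}}\bigl(1-(1-\mathrm{TV}_{\theta_i}^{\theta_{i+1}})^j\bigr)\varsigma$. Summing these against the $\gamma^j$ weights and invoking the same geometric-series bookkeeping that turned (\ref{eq:prefinalBoundOnPerformance}) into (\ref{eq:finalBoundOnPerformance_absluteValue}) yields $\tfrac{2\varsigma\gamma(\mathrm{TV}_{\theta_i}^{\theta_{i+1}})^2}{(1-\gamma)^2}$ as the one-sided worst-case gap, which is (\ref{eq:BoundOn_pii1_pii}).

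The main obstacle is the coupling step: the stated convex decomposition $\pi_{\theta_{i+1}} = (1-\mathrm{TV}_{\theta_i}^{\theta_{i+1}})\pi_{\theta_i} + \mathrm{TV}_{\theta_i}^{\theta_{i+1}} \tilde{\pi}^\prime_{\theta_{i+1}}$ need not hold pointwise with $\tilde{\pi}^\prime_{\theta_{i+1}}$ being a valid probability distribution (the residual can have negative mass), so one cannot reuse Proposition~\ref{prop:BoundOnthetaVsrho} as a literal black box. The fix is to work at the level of couplings rather than mixtures: the $\alpha$-coupled hypothesis of Lemma~\ref{lemma:pi_theta_better_pi_rho} is exactly what the maximal TV coupling delivers, and the remainder of the chain of inequalities only uses this coupling property, not the literal mixture form. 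Once that point is made explicit, the estimate for $\varsigma = \max_x |\mathrm{E}_{u\sim\pi_\beta^\ast}[A_0^{\pi_{\theta_i}}(x,u)]|$ follows because $\tilde{\pi}^\prime_{\theta_{i+1}}$ is, by construction, the local maximizer $\pi_\beta^\ast$ of $L_{\pi_{\theta_i}}$, completing the argument.
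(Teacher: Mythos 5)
Your proposal follows essentially the same route as the paper: the paper's own proof is a short sketch stating that one applies the $\alpha$-coupling argument with $\alpha$ replaced by $\mathrm{TV}_{\theta_i}^{\theta_{i+1}}$ and then repeats the steps of Proposition~\ref{prop:BoundOnthetaVsrho} for the pair $(\pi_{\theta_i},\pi_{\theta_{i+1}})$, which is exactly your correspondence. Your treatment is in fact more careful than the paper's, since you explicitly justify the coupling via the maximal-coupling characterization of total variation and flag that the stated convex decomposition need not be a genuine mixture, a point the paper glosses over.
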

\begin{proof}
The proof follows the same logical structure as Theorem 6, with the key difference being that we apply the analysis to successive policy iterations $\pi_{\theta_i}$ and $\pi_{\theta_{i+1}}$ rather than comparing offline and online policies. By applying the $\alpha$-coupling argument to $\mathrm{TV}_{\theta_i}^{\theta_{i+1}}$ and following the same steps used in Proposition \ref{prop:BoundOnthetaVsrho}'s proof, we establish that the performance difference between successive policies is bounded by the given term involving the total variation distance. This shows that the iterative policy updates maintain improvement guarantees similar to those demonstrated for the offline-online mixing case.
\end{proof}

We use the sampling-based estimation, similar to the original PPO \cite{ppo} and TRPO \cite{abbele2015TRPO} algorithms, which yield the surrogate clipped objective (\ref{eq:L_k-Clipped_Objective}) and is defined based on policy architecture (\ref{eq:pi_theta_mixing}).  

Considering the general class of majorization maximization (MM) algorithms \cite{hunter2004tutorialMM}, maximizing the r.h.s. of (\ref{eq:BoundOn_pii1_pii}) implies maximizing the total return of $\pi_{\theta_{i+1}}$ \cite{abbele2015TRPO}. Similar to (\ref{eq:L_pirho_pi}), with a slight abuse of notation we define the return $L_{\pi_{\theta_{i+1}}}$ as a function of parameter $\theta$: 
\begin{equation}\label{eq:L_pithetai1_pi_samplingFreq}
    L_{\theta_{i}}(\theta_{i+1})= \eta(\theta_{i})+\sum_{x\in\mathcal{X}}P_{\theta_{i}}(x)
\sum_{u\in\mathcal{U}}\pi_{\theta_{i+1}}(u|x)A_{0}^{\theta_{i}}(x,u)
\end{equation}

\subsection{Algorithm Details}\label{subsec:TheAlgorithm}
We need to find $\theta_{i+1}$ that maximizes $L_{\theta_i}(\theta_{i+1})$. We use importance sampling (IS) to estimate $\theta_{i+1}$ that maximizes the term $\sum_{x\in\mathcal{X}}P_{\theta_{i}}(x)
\sum_{u\in\mathcal{U}}\pi_{\theta_{i+1}}(u|x)A_{0}^{\theta_{i}}(x,u)$ in (\ref{eq:L_pithetai1_pi_samplingFreq}) which implies maximizing $L_{\theta_i}(\theta_{i+1})$. By using IS and including penalized $\mathrm{TV}_{\theta_i}^{\theta_{i+1}}$ in the objective that we're maximizing, we conclude to use a clipped objective function that has the same structure as the objective of PPO \cite{ppo}, $J^k(x,u,\theta,\theta_t)$ (see Eq (\ref{eq:L_k-Clipped_Objective})), but is defined with respect to (\ref{eq:pi_theta_mixing}).

Following the formulation of \cite{ICML20effExpl_inPO_OPPO} we introduce APPO in Algorithm \ref{alg:APPO}. Given the offline policy, $\pi_{\rho}$, we initialize for episode $1$ a series of the policy, $\pi_{\theta}$, with an initial parameter $\theta_0$. Then at every training episode $k$ we observe the state after applying the $0$-th step policy, $\pi_{\theta,0}$, Line \ref{line:observe_x1}. Then at every iteration $i$ of the episode $k$, the parameter $\theta_i$ of the policy is optimized.

Given a concrete-time reward (Def \ref{def:concrete-time-reward}), we introduce $J^{k}_{\phi}(\cdot,\cdot,\cdot,\cdot)$, a variant of the clipped objective (\ref{eq:L_k-Clipped_Objective}) defined based on $A_{\phi,t}^{k}$ and is given as $J^k_{\phi}(x,u,\theta,\theta_i) :=$ $\min \left[ \frac{\pi_{\theta}(u|x)}{\pi_{\theta_i}(u|x)} \hat{A}_{\phi,t}^{\pi_{\theta_i\cdot\rho},k},g(\epsilon, \hat{A}_{\phi,t}^{\pi_{\theta_i\cdot\rho},k})\right]$. Consequently, the parameterized part of the policy update is computed according to  $$\theta_{i+1}\gets\argmax\limits_{\theta\in\Theta}\; \mathop{\mathrm{E}}[J^k_{\phi}(x,u,\theta,\theta_i)]$$.
% \begin{minipage}[t]{0.5\textwidth}
\begin{algorithm}[H]
        \textbf{Input} $\pi_{\rho}(\cdot)$ a fixed task predictor trained offline\\
        \textbf{Initialize}$\{\pi_{\theta,t}\}_{t=0}^{N-1}$ with $\theta = \theta_0$\label{line:init}\\
        \ForEach{episode $k\in\{1,\;\dots,\;K\}$}
            {Observe $x_1$\label{line:observe_x1}\\
                \ForEach{iteration $i\in\{1,\;\dots,\;N\}$}
                    {\label{line:begin_optimization}
                            $\theta_{i+1}^{k}\gets \argmax\limits_{\theta\in\Theta}\; \mathop{\mathrm{E}}[J^k_{\phi}(x,u,\theta,\theta_i)]$\\
                       Apply the control $u_i\sim\pi_{\theta_i}$\\
                       Compute $\hat{A}_{\phi,t}^{\pi_{\theta_i,t},k}$ according to (\ref{eq:A_hat})\label{line:end_optimization}
                    }
            }
   \caption{Accelerated PPO}\label{alg:APPO}
\end{algorithm}
% \end{minipage}
% \begin{minipage}[t]{0.45\textwidth}
% \vspace{1em}  % Align with top of algorithm
% Given a concrete-time reward (Def \ref{def:concrete-time-reward}), we introduce $J^{k}_{\phi}(\cdot,\cdot,\cdot,\cdot)$, a variant of the clipped objective (\ref{eq:L_k-Clipped_Objective}) defined based on $A_{\phi,t}^{k}$ and is given as $J^k_{\phi}(x,u,\theta,\theta_i) :=$ $\min \left[ \frac{\pi_{\theta}(u|x)}{\pi_{\theta_i}(u|x)} \hat{A}_{\phi,t}^{\pi_{\theta_i\cdot\rho},k},g(\epsilon, \hat{A}_{\phi,t}^{\pi_{\theta_i\cdot\rho},k})\right]$. Consequently, the parameterized part of the policy update is computed according to $\theta_{i+1}\gets \argmax\limits_{\theta\in\Theta}\; \mathop{\mathrm{E}}[J^k_{\phi}(x,u,\theta,\theta_i)]$.
% \end{minipage}

\begin{figure}[t]
    \centering
    \begin{minipage}[t]{0.43\textwidth}
        \centering
        \includegraphics[width=\textwidth]{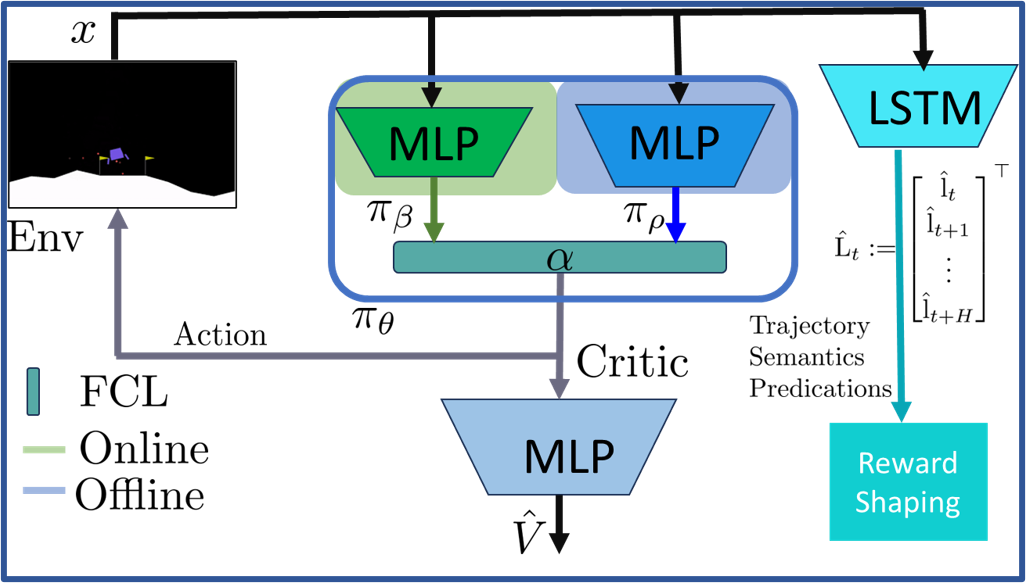}
        \caption{The Actor-Critic RL framework. The Actor's architecture, policy $\pi_\theta$, consists of an offline policy, $\pi_\rho$, and an adaptive policy, $\pi_\beta$, where the two policies are mixed using the parameters of the FCL, $\alpha$. The critic consists of an MLP that approximates the value function. The task predictor LSTM network and the reward shaping are depicted in cyan.}
        \label{fig:WholeSchemeArch}
    \end{minipage}%
    \hfill
    \begin{minipage}[t]{0.57\textwidth}
        \centering
        \includegraphics[width=\textwidth]{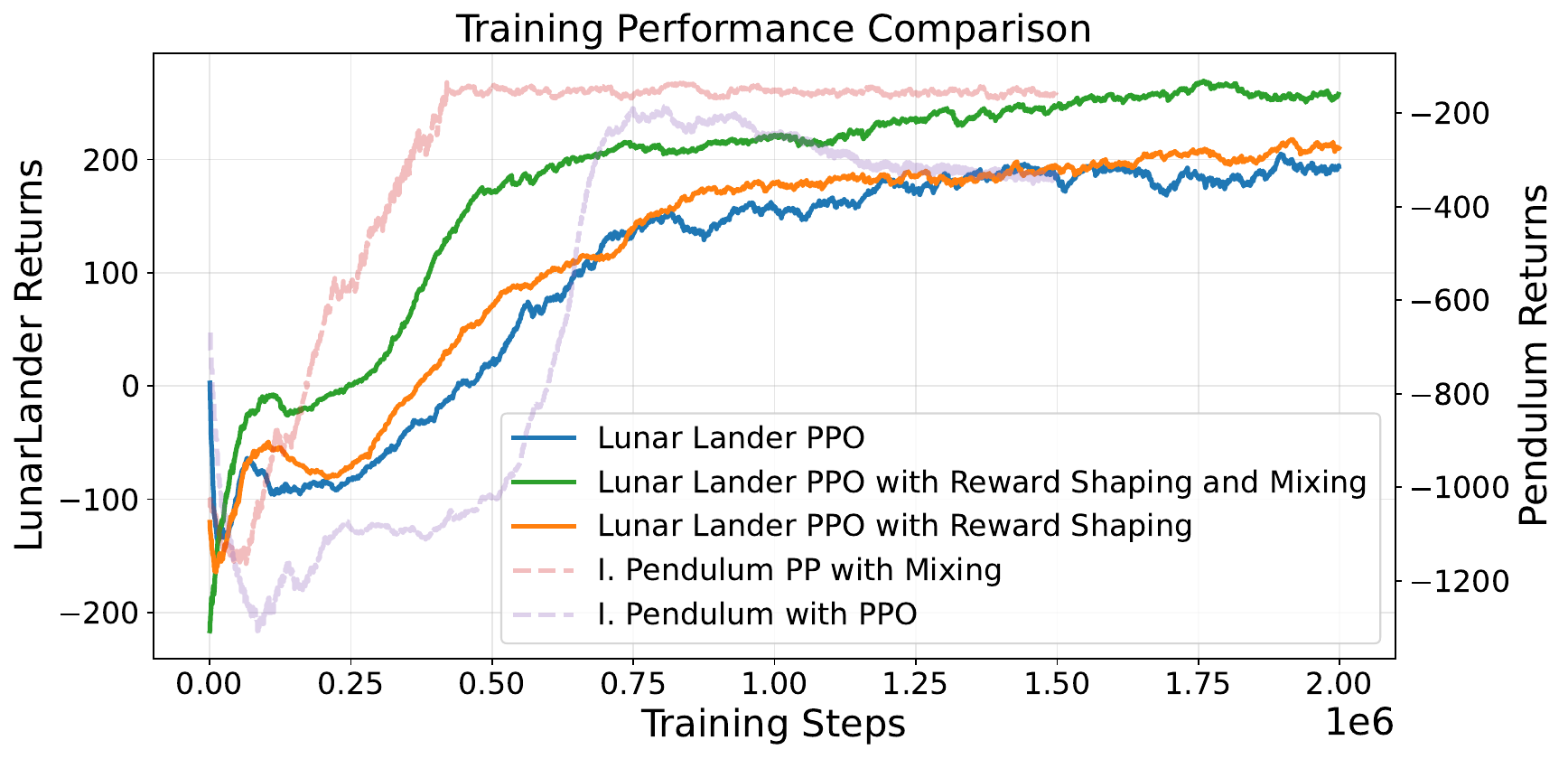}
        \caption{Training performance comparison across different variants of PPO in LunarLander-v2 and Pendulum environments. For LunarLander-v2 (left $y$-axis), we compare vanilla PPO against variants with reward shaping and mixing. The results show that combining reward shaping with policy mixing achieves faster learning and better performance compared to baseline PPO and reward shaping alone. For Pendulum (right $y$-axis, dashed lines), we include PPO with and without mixing for reference. Training steps are shown in millions on the $x$-axis.}
        \label{fig:resultsALL}
    \end{minipage}
    % \caption{Combined caption if needed}
    % \label{fig:combined}
\end{figure}

\section{Case Study}\label{sec:caseStudies}

As a continuation of Examples \ref{ex:lunarLanderTWTLLandingTask} and \ref{ex:LSTM_for_LunarLander}, we demonstrate the empirical performance of our approach in Figure \ref{fig:resultsALL}. For the Lunar Lander environment, where we previously defined the landing task TWTL formula $\phi_{\mathrm{landing}}$ and the LSTM-based task predictor, combining policy mixing with reward shaping (green) achieves faster learning and better asymptotic performance compared to both vanilla PPO (blue) and reward shaping alone (orange). The improvement is particularly noticeable in the early training phase (0-0.5M steps), where our method accelerates learning. 

To test the robustness of our mixing approach, we intentionally degraded pre-trained PPO policies by adding controlled noise to their parameters, using these as our offline policies $\pi_\rho$. Despite starting with these suboptimal policies, our method successfully leverages their partial knowledge while learning to improve upon them. For comparison, we also tested our approach on the Inverted Pendulum environment with similarly degraded offline policies (right $y$-axis, dashed lines), where policy mixing again demonstrates improved learning dynamics over the baseline PPO implementation.

% As a continuation of Examples \ref{ex:lunarLanderTWTLLandingTask} and \ref{ex:LSTM_for_LunarLander}, we demonstrate the empirical performance of our approach in Figure \ref{fig:resultsALL}. For the LunarLander-v2 environment, where we previously defined the landing task TWTL formula $\phi_{\mathrm{landing}}$ and the LSTM-based task predictor, combining policy mixing with reward shaping (green) achieves faster learning and better asymptotic performance compared to both vanilla PPO (blue) and reward shaping alone (orange). The improvement is particularly noticeable in the early training phase (0-0.5M steps), where our method accelerates learning. We also tested our proposed work on the Inverted Pendulum environment, we include results (right $y$-axis, dashed lines), where policy mixing similarly shows improved learning dynamics over the baseline PPO implementation.

\section{Conclusion and Future Work}\label{sec:conclusion}

We presented an approach to accelerate reinforcement learning in environments with delayed rewards through two key innovations: a mixed policy architecture that combines potentially suboptimal offline policies with online learning, and a TWTL-based reward shaping mechanism enabled by a task predictor. For the mixed policy architecture, we established theoretical guarantees showing consistent improvement over both the offline policy and previous iterations. Our empirical results on the Lunar Lander and Inverted Pendulum environments demonstrate that even with degraded offline policies, our method achieves faster learning and better asymptotic performance compared to vanilla PPO.

Several directions for future work emerge from this study. First, exploring more complex TWTL specifications that capture intricate temporal dependencies could extend our framework to more challenging tasks, such as soccer games. Second, investigating adaptive mixing strategies that automatically adjust based on the relative performance of offline and online policies could further improve learning efficiency. The integration of other temporal logic formalisms and their corresponding quantitative semantics could also provide interesting avenues for reward shaping in reinforcement learning.

% \acks{We thank a bunch of people.}
% \bibliographystyle{IEEEtran}
\bibliography{IEEEabrv,l4dc2025-sample}
% \bibliographystyle{IEEEabrv} % Or 'plain', depending on your preference
% \bibliography{l4dc25_refs}

\end{document}